\documentclass[11pt]{article}
\usepackage[letterpaper,margin=1in]{geometry}
\usepackage[T1]{fontenc}
\usepackage{lmodern}
\usepackage{amsmath,amssymb,amsthm}
\usepackage{microtype}
\usepackage{enumitem}
\usepackage[round,authoryear]{natbib}
\usepackage{xcolor}
\usepackage{hyperref}
\usepackage{needspace}
\definecolor{linkblue}{RGB}{30,65,100}
\hypersetup{colorlinks=true,linkcolor=linkblue,citecolor=linkblue,urlcolor=linkblue,
  pdftitle={An Order-Theoretic Characterization of Consistent Inductive Inference},
  pdfsubject={An order-theoretic characterization of fixed-target finite-mistake prediction}}
\newtheorem{theorem}{Theorem}[section]
\newtheorem{lemma}[theorem]{Lemma}
\newtheorem{proposition}[theorem]{Proposition}
\newcommand{\FH}{\mathcal F_H}
\newcommand{\Fh}{\mathcal F_h}
\newcommand{\tr}{\operatorname{tr}}
\newcommand{\graph}{\operatorname{graph}}
\newcommand{\dom}{\operatorname{dom}}
\newcommand{\KB}{\mathrm{KB}}

\setlist[enumerate]{leftmargin=*,itemsep=6pt,topsep=6pt}
\title{An Order-Theoretic Characterization of Consistent Inductive Inference}
\author{Zhou Lu\\ \texttt{leozoroaster@gmail.com}}
\date{September 2026}

\begin{document}
\maketitle
\vspace{-1.5em}
\begin{abstract}
When can a learner make only finitely many prediction errors along every
infinite sequence labeled by a fixed, unknown hypothesis? We characterize
this form of consistency for arbitrary binary hypothesis classes in ZFC,
without requiring a uniform mistake bound. The characterization uses a
single linear order on finite realizable traces. Each trace selects its
least subtrace, and the order must satisfy two conditions: conflicting
traces select different subtraces, and the order is well-founded on the
traces of each fixed target. These conditions induce a learner whose
selected evidence decreases on every mistake. Conversely, a consistent
learner yields such an order through canonical mistake transcripts and
the Kleene--Brouwer ordering. The result provides a representation of
consistent prediction by finite evidence, answering a question of \citet{lu2024}.
\end{abstract}

\section{Introduction}
The epistemological problem of induction concerns how experience can
justify claims about cases not yet observed: a regularity in past
observations does not by itself establish that the regularity will persist.
A learning-theoretic approach examines the long-run reliability of
inductive methods within a specified class of possible worlds
\citep{schulte1999,kelly2004}. We study a precise version of this question
for sequential prediction. Observations are labeled by one fixed, unknown
hypothesis, while inputs may be presented in an arbitrary order. Under
what conditions on the hypothesis class can a learner make only finitely
many prediction errors along every such sequence? Success means that,
on each sequence, every prediction is correct from some round onward.
The last mistaken round and the total number of mistakes may depend
on both the target and the entire input sequence.

This question depends on both the possible laws and the meaning of
success. A uniform finite mistake bound is characterized by finite
Littlestone dimension \citep{littlestone1988}. Allowing the bound to depend
on the target leads to the non-uniform theory of \citet{lu2024}.
Section~5 of that work isolates the weaker, sequence-dependent notion of
consistency studied here and asks for a characterization.\footnote{A different
protocol allows the hypothesis realizing the data to change with each
finite prefix, giving a stronger adversarial problem governed by
infinite Littlestone trees \citep{bousquet2021}. A characterization for
one fixed target must preserve this distinction.}

Our main result describes consistency through an order on finite
observation records. We call such a record a \emph{trace}: it contains
the observed labeled examples, with order and repetitions removed.
Given a linear order on traces, each trace selects its least subtrace.
The theorem requires that conflicting traces never select the same
subtrace and that no infinite descending sequence of traces be realized
by one target. The first condition makes selected evidence sufficient
to specify compatible predictions; the second prevents an endless
sequence of corrections. Together they are necessary and sufficient
for consistency.

The two directions explain different aspects of the characterization.
An order satisfying these conditions directly supplies a prediction
rule: use the compatible labels supplied by all traces with the currently
selected evidence. Every error forces a strict decrease in that
evidence. For the converse, an arbitrary consistent learner may depend
on the entire ordered history. We associate each finite trace with a
canonical history consisting only of mistakes. A replay property
recovers that history from its trace. Ordering the histories by the
Kleene--Brouwer construction, and breaking ties between traces, then
produces the required order. Consistency enters precisely when we
exclude infinite mistake histories for a fixed target.

The result is a structural characterization, rather than a numerical complexity measure or an effective decision procedure.
The characterization gives a normal form for consistent prediction.
Whenever any learner succeeds, there is one whose dependence on past
observations is entirely through an unordered trace and a subtrace
selected by one fixed order. Along every admissible sequence, both the
selected evidence and its decoded predictor eventually stabilize
(Proposition~\ref{prop:stabilize}). Section~\ref{sec:discussion} develops this
interpretation and relates it to mistake bounds, sample compression,
and inductive reasoning.

\paragraph{AI-assisted mathematical development.}
OpenAI's GPT models contributed substantially to the mathematical
development of this work, including the formulation of the
order-theoretic characterization and the construction of its proof.
The arguments were developed and refined through iterative interaction
with the author, who specified the problem, examined the
proposed proofs, and guided their refinement.

A Lean 4 formalization of the main characterization and stabilization proposition is available at \texttt{https://github.com/leozoroaster/finite-evidence-consistency-lean}.

\section{Setup and characterization}\label{sec:setup}
We work in ZFC.\footnote{The converse uses choice to well-order
$X\times2$. Choice is essential for the theorem over arbitrary domains:
if the converse held in ZF, apply it to the singleton class containing
the constant-zero function $h_0$. Its witness would well-order
$\mathcal F_{h_0}$, hence the singleton traces $\{(x,0)\}$, and
therefore $X$. Thus the universal converse implies the axiom of choice.}
Let $X$ be an arbitrary set, let $2=\{0,1\}$, and let
$H\subseteq 2^X$, where $2^X$ is the set of functions from $X$ to $2$.
Write $\omega=\{0,1,2,\ldots\}$. For a set $S$, write $S^{<\omega}$
for its finite sequences and $[S]^{<\omega}$ for its finite subsets.
Elements of $E=X\times2$ are \emph{labeled examples}; an element of
$E^{<\omega}$ is a \emph{history}.

\paragraph{Prediction protocol.}
Nature fixes an unknown target $h\in H$. At each round $n<\omega$, it
presents an input $x_n\in X$. The learner predicts a label in $2$ and
then observes $h(x_n)$. Inputs may repeat, and Nature may choose them
adaptively. For a deterministic learner, a guarantee for every input
sequence includes every sequence produced by such adaptive choices.

\paragraph{Consistency.}
A deterministic learner is a map $A:E^{<\omega}\times X\to2$.
We call $H$ \emph{consistent} if there exists a learner $A$ such that,
for every $h\in H$ and every $(x_n)_{n<\omega}\in X^\omega$,
\[
 \bigl|\{n<\omega:
 A(((x_i,h(x_i)))_{i<n},x_n)\ne h(x_n)\}\bigr|<\infty.
\]
Thus the total number of mistakes may depend on $h$ and on the entire
input sequence. No computability or measurability is required of $A$.
The term consistency refers throughout to this finite-mistake guarantee;
agreement with a finite sample will be stated explicitly.

\paragraph{Traces and compatibility.}
For any function $g:X\to2$, let
$\graph(g)=\{(x,g(x)):x\in X\}$. Define
\[
 \Fh=[\graph(h)]^{<\omega},\qquad
 \FH=\bigcup_{h\in H}\Fh.
\]
We call elements of $\FH$ \emph{realizable traces}, or simply
\emph{traces}. Each is a finite partial function from $X$ to $2$.
A history $\tau$ is realizable when its set of entries
$\tr(\tau)$ belongs to $\FH$; taking its trace discards order and
repetitions. Two traces $p,q$ \emph{conflict} if
$(x,0)\in p$ and $(x,1)\in q$, or conversely, for some $x\in X$.
Otherwise they are \emph{compatible}. Compatibility says that $p\cup q$
is a partial function; it does not require $p\cup q$ to be realizable
by a member of $H$.

\paragraph{Selected evidence.}
A strict linear order $\prec$ on $\FH$ is an irreflexive, transitive
relation that compares every two distinct traces. Write $\preceq$ for
its reflexive extension. For any such order and any $p\in\FH$, define
\begin{equation}\label{eq:minimum}
 m_\prec(p)=\min_\prec\{s:s\subseteq p\}.
\end{equation}
Every subtrace of $p$ belongs to $\FH$. The family in
\eqref{eq:minimum} is nonempty and finite, so its least element exists
uniquely, without any appeal to choice. This definition is conditional
on an order being given. It makes no assertion that a suitable order
exists. We refer to $m_\prec(p)$ as the \emph{selected evidence} of $p$;
``least'' always refers to $\prec$, not to cardinality.

\subsection{Characterization of consistency}
The characterization separates a local requirement on prediction from
a convergence requirement. Traces with the same selected evidence must
agree wherever they overlap. As observations accumulate, the selected
evidence can only move earlier in the order. The second condition below
ensures that such movement cannot continue indefinitely along a fixed
target.

\begin{theorem}[Finite-evidence characterization]\label{thm:main}
The class $H$ is consistent if and only if there exists a strict linear
order $\prec$ on $\FH$ satisfying the following conditions, with
$m=m_\prec$.
\begin{enumerate}
\item \textbf{Conflict separation.} For every $p,q\in\FH$,
\begin{equation}\label{eq:conflict}
 p,q\text{ conflict}\quad\Longrightarrow\quad m(p)\ne m(q).
\end{equation}
\item \textbf{Well-foundedness for each target.} For every $h\in H$,
the restriction $\prec|_{\Fh}$ is a well-order: each nonempty subset of
$\Fh$ has a least element. Equivalently, there are no $h\in H$ and
$(p_n)_{n<\omega}\in(\Fh)^\omega$ with
\begin{equation}\label{eq:descent}
 p_{n+1}\prec p_n\qquad(n<\omega).
\end{equation}
\end{enumerate}
\end{theorem}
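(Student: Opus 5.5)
The plan is to prove the two directions separately. The forward direction decodes predictions from selected evidence. The converse builds the order from canonical mistake histories compared by the Kleene--Brouwer order.

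\emph{Sufficiency.} Given $\prec$ satisfying both conditions, define a learner as follows. On a history $\tau$ with trace $p=\tr(\tau)$ and query $x$, let $s=m(p)$. Predict $b$ if some $q\in\FH$ with $m(q)=s$ contains $(x,b)$, and predict $0$ otherwise. Conflict separation makes all traces with selected evidence $s$ pairwise compatible, so this $b$ is unique. Fix $h$ and an input sequence, and let $p_n$ be the trace after $n$ rounds, so $p_n\in\Fh$.
\begin{itemize}
\item Since subtraces of $p_n$ are subtraces of $p_{n+1}$, we have $m(p_{n+1})\preceq m(p_n)$.
\item Suppose round $n$ is a mistake with true label $b=h(x_n)$. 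If $m(p_{n+1})=m(p_n)=s$, then $p_{n+1}\ni(x_n,b)$ is a trace with selected evidence $s$. The learner would then have predicted $b$, which is a contradiction. Hence every mistake strictly decreases the selected evidence.
\end{itemize}
Thus infinitely many mistakes would produce an infinite $\prec$-descending sequence in $\Fh$, contradicting well-foundedness for $h$.

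\emph{Necessity: canonical transcripts.} Let $A$ be consistent, and fix by choice a well-order $<_E$ of $E$. For $p\in\FH$, define the transcript $c(p)\in E^{<\omega}$ greedily. Start from the empty history $u$. While some $(x,b)\in p$ has $A(u,x)\ne b$, append the $<_E$-least such element. Every appended entry is a mistake on the true label of a fixed $h$ realizing $p$. So if this process ran forever, feeding $A$ the successive inputs would give an input sequence with infinitely many mistakes, which is impossible. Hence $c(p)$ is a finite history. It lies in the tree $T_h$ of finite histories in $\graph(h)$ all of whose entries are mistakes of $A$. Consistency says $T_h$ has no infinite branch. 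Let $<_{KB}$ be the Kleene--Brouwer order on $E^{<\omega}$ built from $<_E$, and let $\lhd$ be a well-order of $[E]^{<\omega}$. Define
\[
p\prec q \iff c(p)<_{KB}c(q)\ \text{ or }\ \bigl(c(p)=c(q)\text{ and }p\lhd q\bigr).
\]
This is a strict linear order. On $\Fh$ it is well-founded: $c$ maps $\Fh$ into $T_h$, on which $<_{KB}$ is a well-order because $T_h$ has no infinite branches, and the lexicographic product of two well-orders is well-founded.

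\emph{Necessity: the key lemma.} The main obstacle, and the heart of the argument, is showing that for $s\subseteq p$ we have $c(p)\preceq_{KB}c(s)$. Compare the two greedy replays at their first divergence after a common prefix $u$.
\begin{itemize}
\item If $c(s)$ is a proper prefix of $c(p)$, then $c(p)<_{KB}c(s)$, since in Kleene--Brouwer extensions precede their prefixes.
\item $c(p)$ cannot be a proper prefix of $c(s)$. Stopping at $u$ means $A(u,\cdot)$ is correct on all of $p\supseteq s$, so the $s$-replay also stops at $u$.
\item If they append different elements $e\ne e'$, then $e$ is $<_E$-least among the mistakes in $p$, while $e'$ is a mistake lying in $s\subseteq p$. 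Hence $e<_E e'$, and $c(p)<_{KB}c(s)$.
\end{itemize}
Consequently $m(p)$ minimizes $c$ over the subtraces of $p$, and $p$ itself attains the value $c(p)$, so $c(m(p))=c(p)$.

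\emph{Necessity: conflict separation.} Suppose $m(p)=m(q)$. Then $c(p)=c(q)=u$. Since both replays stop at $u$, the predictor $A(u,\cdot)$ is correct on every element of $p$ and of $q$. Therefore $p$ and $q$ are compatible, which is exactly conflict separation in contrapositive form.
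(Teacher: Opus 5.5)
Your proof is correct and follows the paper's overall architecture. For sufficiency you use decoded predictors. For necessity you use greedy $<_E$-least mistake transcripts, the comparison ``$s\subseteq p\Rightarrow c(p)\leq_{\KB}c(s)$'', the Kleene--Brouwer well-order on each target's mistake tree, and a lexicographic tie-break.

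Your converse is leaner than the paper's in two respects.

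First, you get termination of each transcript from consistency itself: a nonterminating greedy run would be an $h$-labeled input sequence with a mistake at every round. The paper instead first normalizes the learner so that it never errs on a previously seen input. That bounds every transcript length by $|p|$ without using consistency. As a result, the paper's transcript construction, replay, comparison and conflict separation hold for any normalized learner, and consistency enters only through the absence of infinite branches in the mistake trees.

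Second, you derive conflict separation from the identity $c(m(p))=c(p)$. This needs only the comparison lemma and the fact that $c$ is the primary key of $\prec$. Two traces with the same selected evidence then share the terminal predictor $A(c(p),\cdot)$, so they are compatible. The paper instead proves the Replay lemma and uses a cardinality-first tie-break to identify the selected evidence exactly as $m_\prec(p)=\tr(T(p))$. It then uses replay to equate the transcripts.

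Your route therefore dispenses with replay and allows any well-order of finite traces as the tie-break. The cost is that $m(p)$ need not coincide with the set of examples actually used in the canonical transcript. The paper's route gives that concrete description of the selected evidence.

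One small omission: your learner must also be defined on histories with $\tr(\tau)\notin\FH$, for instance by predicting $0$. Such histories never occur along a target sequence, so nothing else changes.
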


The quantifiers are $\exists\prec\,\forall h\in H$: one order must
serve every target.\footnote{This is only a clarification of the scope
of the well-order requirement. The order need not be well-founded on
all of $\FH$. A global infinite descending sequence, if one exists,
must contain only finitely many terms in each $\Fh$. The existence
of such a sequence is neither assumed nor used in the proof.}
Section~\ref{sec:decoder} constructs the
predictor indexed by each selected subtrace, giving the term evidence
a precise operational meaning.

\section{Proof of the characterization}\label{sec:proof}
If $H=\varnothing$, any learner satisfies the consistency requirement
vacuously, and the empty relation satisfies the theorem. Henceforth
assume $H\ne\varnothing$. Write $\tau^\frown e$ for the history
obtained by appending the labeled example $e$ to $\tau$, and $()$ for
the empty history.

\subsection{Order implies consistency}\label{sec:decoder}
Let $\prec$ satisfy the two conditions of Theorem~\ref{thm:main},
and put $m=m_\prec$. We first construct a predictor from each possible
value of $m$. For $r\in m(\FH)$, let
\[
 B_r=\{p\in\FH:m(p)=r\},\qquad
 v_r=\bigcup_{p\in B_r}p.
\]
The family $B_r$ contains all realizable traces that select $r$ as
their evidence. By conflict separation, any two members of $B_r$ are
compatible. Consequently $v_r$ is a partial function: it never assigns
both labels to the same input. Extend it to a total predictor $g_r:X\to2$
by setting
\[
 g_r(x)=
 \begin{cases}
 b,&(x,b)\in v_r,\\
 0,&x\notin\dom(v_r).
 \end{cases}
\]
In particular, $g_r$ agrees with every trace in $B_r$.\footnote{The
predictor $g_r$ need not belong to $H$: compatibility of the traces in
$B_r$ does not require their union to be realizable by one hypothesis.}

Define
\[
 A_\prec(\tau,x)=
 \begin{cases}
 g_{m(\tr(\tau))}(x),&\tr(\tau)\in\FH,\\
 0,&\tr(\tau)\notin\FH.
 \end{cases}
\]
Thus the learner uses the accumulated trace to select a finite subtrace,
then predicts with its associated $g_r$. The union defining $g_r$ ranges
over all traces in $B_r$, rather than over the history currently observed.
It is a fixed part of the construction once $\prec$ and $H$ are given.

For $p,q\in\FH$, enlarging the trace enlarges the family over which
the minimum is taken. Therefore
\begin{equation}\label{eq:monotone}
 p\subseteq q\quad\Longrightarrow\quad m(q)\preceq m(p).
\end{equation}
Moreover, if $p\subseteq q$ and $(x,b)\in q$ satisfies
$g_{m(p)}(x)\ne b$, then $m(q)\prec m(p)$. Indeed, equality would
place $q$ in $B_{m(p)}$, so $(x,b)\in v_{m(p)}$ and
$g_{m(p)}(x)=b$, a contradiction. Thus a prediction error forces the
selected evidence to change strictly.

Fix a target $h\in H$ and a sequence $(x_n)_{n<\omega}\in X^\omega$.
Put
\[
 p_n=\{(x_i,h(x_i)):i<n\},\qquad r_n=m(p_n).
\]
Then $r_n\in\Fh$, and \eqref{eq:monotone} gives
$r_{n+1}\preceq r_n$ at every round. The preceding argument gives
$r_{n+1}\prec r_n$ whenever $A_\prec$ makes a mistake at round $n$.
If there were infinitely many mistakes, enumerate their indices as
$n_0<n_1<\cdots$. For each $k$,
\[
 r_{n_{k+1}}\preceq r_{n_k+1}\prec r_{n_k}.
\]
This would be an infinite descending sequence in $\Fh$, contrary to
\eqref{eq:descent}. Hence $A_\prec$ is consistent.

\subsection{Consistency implies an order}\label{sec:converse}
We now construct the order from a consistent learner. For each trace
$p$, we select a canonical history of mistakes whose terminal predictor
agrees with all of $p$. The trace of this history will become the least
subtrace of $p$ in our order. Replay and comparison lemmas establish
this claim and yield conflict separation. We then use consistency to
prove well-foundedness separately for each target.

\paragraph{Normalization.}
We may assume that the learner remembers observed labels and never makes
a mistake on a previously seen input. This simple convention ensures
that the mistake histories constructed below contain only distinct inputs.
Formally, let $A_0$ be a consistent learner. On a realizable history $\tau$,
define $A(\tau,x)$ to be the previously observed label when $x$ has
already appeared, and $A_0(\tau,x)$ otherwise. On unrealizable histories,
set $A=A_0$. Along any sequence labeled by a fixed $h\in H$, the two learners
receive the same histories, and every prediction on which they differ
is correct for $A$. Thus $A$ makes a subset of the mistakes of $A_0$
and remains consistent. With $f_\tau(x)=A(\tau,x)$, we have
\[
 \tr(\tau)\subseteq\graph(f_\tau)
 \qquad\text{for every realizable history }\tau.
\]
\paragraph{Canonical transcripts.}
By the axiom of choice, fix a well-order $<_E$ on $E=X\times2$,
to be used for every trace.
For a trace $p$ and a history $\tau$, let
\[
 D_p(\tau)=\{(x,b)\in p:f_\tau(x)\ne b\}
\]
be the examples in $p$ that the current predictor misclassifies.
Starting from $\tau_0=()$, recursively set
\[
 \tau_{k+1}=\tau_k^\frown\min_{<_E}D_p(\tau_k)
 \qquad\text{if }D_p(\tau_k)\ne\varnothing,
\]
and stop when this error set is empty. The procedure presents the
learner only with examples on which its current prediction is wrong.
The order $<_E$ makes the selection deterministic whenever several
such examples are available.

Every intermediate history has trace contained in $p$, and is therefore
realizable. If $(x,b)\in p$ has an input already seen in $\tau_k$,
its label agrees with the earlier observation because $p$ is a partial
function. Normalization then gives $f_{\tau_k}(x)=b$. Hence no such
example lies in $D_p(\tau_k)$: every appended input is new, and the
procedure stops after at most $|p|$ steps.

Denote its terminal history by $T(p)$, the \emph{canonical transcript}
of $p$, and let $r(p)=\tr(T(p))$. By construction and termination,
\[
 r(p)\subseteq p,\qquad p\subseteq\graph(f_{T(p)}).
\]
The transcript may omit many examples, but its terminal predictor agrees
with the whole trace. Since the learner depends on an ordered history,
the set $r(p)$ alone does not a priori record the information needed to
recover that predictor. The following lemma shows that the canonical
selection rule recovers the entire transcript from $r(p)$, and that
deleting any unselected examples leaves the transcript unchanged.

\Needspace{10\baselineskip}
\begin{lemma}[Replay]\label{lem:replay}
For $p,q\in\FH$,
\begin{equation}\label{eq:replay}
 r(p)\subseteq q\subseteq p\quad\Longrightarrow\quad T(q)=T(p).
\end{equation}
\end{lemma}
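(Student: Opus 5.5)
We prove \eqref{eq:replay} by induction on the steps of the two transcript procedures, showing that they produce the same histories at every stage and stop at the same stage. Write $\tau_0,\tau_1,\ldots,\tau_K=T(p)$ for the transcript run on $p$, and $\sigma_0,\sigma_1,\ldots$ for the run on $q$. Both start from $()$, so $\sigma_0=\tau_0$. The claim is that $\sigma_k=\tau_k$ for every $k\le K$, and that the $q$-procedure stops exactly at stage $K$.

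For the inductive step, suppose $\sigma_k=\tau_k$ with $k<K$. Since both runs use the same history, they use the same predictor $f_{\tau_k}$. Because $q\subseteq p$, we have
\[
 D_q(\tau_k)=D_p(\tau_k)\cap q\subseteq D_p(\tau_k).
\]
Let $e=\min_{<_E}D_p(\tau_k)$. This is the example appended at step $k+1$ of the $p$-run, so $e\in\tr(\tau_{k+1})\subseteq r(p)\subseteq q$. Hence $e\in D_q(\tau_k)$, so $D_q(\tau_k)$ is nonempty. Since it is a subset of $D_p(\tau_k)$ that contains the $<_E$-least element of $D_p(\tau_k)$, its own $<_E$-minimum is also $e$. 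Therefore $\sigma_{k+1}=\sigma_k^\frown e=\tau_{k+1}$.

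At the final stage $K$, the $p$-procedure stops, so $D_p(\tau_K)=\varnothing$. Then $D_q(\tau_K)\subseteq D_p(\tau_K)$ is empty as well, and the $q$-procedure also stops at $\sigma_K=\tau_K$. Thus $T(q)=T(p)$.

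The one point needing care is the hypothesis $r(p)\subseteq q$. It is exactly what guarantees that each example chosen in the $p$-run is still available in $q$. The hypothesis $q\subseteq p$ then prevents $q$ from offering any $<_E$-smaller erroneous example that could redirect the transcript. We also note that $q\in\FH$, so every intermediate history $\sigma_k$ is realizable. The normalization therefore applies, and both runs are well defined with the same predictors $f_{\sigma_k}$.
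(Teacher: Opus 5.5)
Your proof is correct and follows the paper's argument essentially step for step. You induct along the $p$-run and use $r(p)\subseteq q$ to keep each selected example in $D_q(\tau)\subseteq D_p(\tau)$, so its minimum is unchanged; then $D_q(T(p))\subseteq D_p(T(p))=\varnothing$ forces both runs to stop together.
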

\begin{proof}
Both runs start at the empty history. Suppose inductively that they
have reached the same history $\tau$, and that the run on $p$ has not
stopped. Its next example is $e=\min_{<_E}D_p(\tau)$. Since $e$ appears
in $T(p)$, it belongs to $r(p)\subseteq q$. Both runs use the same
predictor $f_\tau$, so
\[
 e\in D_q(\tau)\subseteq D_p(\tau).
\]
The least element of $D_p(\tau)$ is therefore also the least element
of $D_q(\tau)$. Both runs append $e$, completing the induction step.
In particular, the run on $q$ cannot stop earlier. When the run on $p$
stops, $D_q(T(p))\subseteq D_p(T(p))=\varnothing$, so the run on $q$
stops at the same history.
\end{proof}

Taking $q=r(p)$ and then taking traces gives
\begin{equation}\label{eq:replay-idempotent}
 T(r(p))=T(p),\qquad r(r(p))=r(p).
\end{equation}
Next consider what happens when $p\subseteq q$. Until their runs
diverge, every error available in $p$ is also available in $q$.
The larger trace can introduce a smaller selected example at the first
divergence, or it can prolong the transcript after the run on $p$ has
stopped. We use an order on histories in which either change moves the
transcript earlier.

Equip $E^{<\omega}$ with the strict \emph{Kleene--Brouwer order}
$<_{\KB}$: $\sigma<_{\KB}\tau$ if $\tau$ is a proper prefix of
$\sigma$, or if neither is a prefix of the other and
$\sigma_j<_E\tau_j$ at their first differing position $j$.
Thus proper extensions precede their prefixes; otherwise the first
different entry determines the comparison. This is a strict linear
order. Write $\leq_{\KB}$ for its reflexive extension.

\begin{lemma}[Comparison]\label{lem:comparison}
For $p,q\in\FH$,
\begin{equation}\label{eq:comparison}
 p\subseteq q\quad\Longrightarrow\quad T(q)\leq_{\KB}T(p).
\end{equation}
\end{lemma}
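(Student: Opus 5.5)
We compare the two canonical runs step by step, following the template of the proof of Lemma~\ref{lem:replay}. Write $T(p)=(e_0,\dots,e_{k-1})$ and $T(q)=(e'_0,\dots,e'_{l-1})$, with prefixes $\tau_i$ and $\tau'_i$ respectively. Let $j$ be the length of their longest common prefix, so $\tau_j=\tau'_j=:\tau$. Both runs have reached the same history, so they use the same predictor $f_\tau$. Since $p\subseteq q$, we have
\[
 D_p(\tau)\subseteq D_q(\tau).
\]
We then split into three cases according to how the runs end or diverge at position $j$.

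\emph{Case 1: the run on $p$ stops at $\tau$}, so $T(p)=\tau$ and $j=k$. Then $T(q)$ extends $T(p)$. Either $T(q)=T(p)$, or $T(p)$ is a proper prefix of $T(q)$, which gives $T(q)<_{\KB}T(p)$ by the first clause of the definition of $<_{\KB}$.

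\emph{Case 2: the run on $q$ stops at $\tau$ but the run on $p$ does not.} This is impossible. We would have $D_q(\tau)=\varnothing$ while $D_p(\tau)\neq\varnothing$, contradicting $D_p(\tau)\subseteq D_q(\tau)$.

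\emph{Case 3: neither run stops at $\tau$.} By maximality of $j$, the next entries differ: $e'_j\neq e_j$. Here $e_j=\min_{<_E}D_p(\tau)$ lies in $D_q(\tau)$, so $e'_j=\min_{<_E}D_q(\tau)\leq_E e_j$, and hence $e'_j<_E e_j$. Since the two histories differ at position $j$, neither is a prefix of the other. Their first differing position is $j$, so the second clause gives $T(q)<_{\KB}T(p)$.

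The main obstacle is only bookkeeping. We must check that the prefix case goes in the right direction: a longer transcript is $<_{\KB}$-smaller, which matches the stated orientation. We must also check that the run on $q$ is always defined. This holds because every intermediate trace of the run on $q$ is contained in $q\in\FH$, so every history involved is realizable. Termination of both runs was established before the lemma, so $j$ is well defined.
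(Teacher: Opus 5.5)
Your proof is correct and is essentially the paper's argument: your Case~2 is the paper's observation that $T(q)$ cannot be a proper prefix of $T(p)$, your Case~1 is its extension-first case, and your Case~3 is its first-divergence comparison using $D_p(\tau)\subseteq D_q(\tau)$. You organize the case split around the longest common prefix and which run stops there, which only makes exhaustiveness slightly more explicit.
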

\begin{proof}
First, $T(q)$ cannot be a proper prefix of $T(p)$. Otherwise, at the
history $T(q)$, the next example selected by the run on $p$ would be
a misclassified member of $p\subseteq q$, contradicting termination
of the run on $q$.

If $T(p)$ is a prefix of $T(q)$, including equality, the conclusion
follows from the extension-first convention. Otherwise let $\tau$ be
their longest common prefix and let $e,f$ be the respective next entries
of $T(p),T(q)$. At this history,
\[
 e=\min_{<_E}D_p(\tau)\in D_p(\tau)\subseteq D_q(\tau),
 \qquad f=\min_{<_E}D_q(\tau).
\]
Hence $f\leq_E e$. Since these entries differ, $f<_E e$, and the
definition of the Kleene--Brouwer order gives $T(q)<_{\KB}T(p)$.
Later entries cannot change this comparison.
\end{proof}

\paragraph{Constructing the order on traces.}
We compare traces primarily by their transcripts. Different traces may
have the same transcript, so we also need a tie-break. Define
$\triangleleft$ on $\FH$ by comparing cardinalities first, and then
comparing the $<_E$-increasing enumerations lexicographically among
traces of the same cardinality.

This is a well-order. For a fixed cardinality $n$, a nonempty family of
increasing enumerations has a lexicographically least member: choose
the least occurring first coordinate, then the least second coordinate
among those with that first coordinate, and continue for $n$ steps.
A nonempty family of finite traces therefore has a least member by first
choosing its least occurring cardinality. Moreover,
$s\subsetneq p$ implies $s\triangleleft p$. This last property is
the reason for using cardinality as the first tie-break: the transcript's
own trace must precede every larger trace having the same transcript.

Define $\prec$ on $\FH$ by
\[
 p\prec q\quad\Longleftrightarrow\quad
 T(p)<_{\KB}T(q)\quad\text{or}\quad
 \bigl[T(p)=T(q)\text{ and }p\triangleleft q\bigr].
\]
This is the lexicographic order on the pairs $(T(p),p)$.
Both constituent comparisons are strict linear orders, and the second
distinguishes traces with equal transcripts. Thus $\prec$ is a strict
linear order on $\FH$.

We claim that its least subtrace of $p$ is exactly $r(p)$.
For any $s\subseteq p$, replay and comparison give
\[
 T(r(p))=T(p)\leq_{\KB}T(s).
\]
If the inequality is strict, the primary comparison gives
$r(p)\prec s$. If the transcripts are equal, taking traces gives
$r(p)=r(s)\subseteq s$. Either $r(p)=s$, or the inclusion is proper,
in which case $r(p)\triangleleft s$. The tie-break therefore gives
$r(p)\preceq s$ in this case as well. Since $r(p)\subseteq p$, we
have proved
\begin{equation}\label{eq:minimum-trace}
 m_\prec(p)=r(p)\qquad(p\in\FH).
\end{equation}

\paragraph{Conflict separation.}
Suppose $m_\prec(p)=m_\prec(q)=s$. By
\eqref{eq:minimum-trace}, $r(p)=r(q)=s$, and replay yields
\[
 T(p)=T(r(p))=T(s)=T(r(q))=T(q).
\]
The terminal predictor $f_{T(s)}$ agrees with every example of $p$ and
of $q$. They cannot assign opposite labels to a common input. Equal
minima therefore imply compatibility, proving \eqref{eq:conflict}.

\paragraph{Well-foundedness for each target.}
The finite construction and conflict separation are now established.
Termination on each individual finite trace does not yet control an
infinite sequence of different traces. To obtain that control, we use
consistency on a fixed target and the following classical tree-ordering
fact \citep[Definition~4.5 and Lemma~4.6]{aschenbrenner2004}.
This fact turns the absence of infinite branches into a well-order on
the corresponding finite histories.

\begin{lemma}[Kleene--Brouwer]\label{lem:KB}
Let $W\subseteq E^{<\omega}$ be prefix-closed, with $E$ well-ordered
by $<_E$. Suppose $W$ has no infinite branch, meaning that no
$w\in E^\omega$ has every finite prefix in $W$. Then the restriction
of $<_{\KB}$ to $W$ is a well-order.
\end{lemma}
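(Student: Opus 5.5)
We have to prove the Kleene--Brouwer lemma: if $W$ is prefix-closed and has no infinite branch, then $<_{\KB}$ restricted to $W$ is a well-order. Linearity of $<_{\KB}$ was already noted, so the only issue is well-foundedness. We argue by contradiction. Suppose $(\sigma_n)_{n<\omega}$ is a sequence in $W$ with $\sigma_{n+1}<_{\KB}\sigma_n$ for all $n$. From it we will build, coordinate by coordinate, an infinite branch $w\in E^\omega$ all of whose prefixes lie in $W$.

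The construction keeps track of a finite prefix $u_k$ of length $k$ and an infinite tail of the sequence. We maintain the invariant that $u_k$ is a prefix of every $\sigma_n$ with $n\ge N_k$, and that this tail is still strictly $<_{\KB}$-descending. Start with $u_0=()$ and $N_0=0$. For the step from $k$ to $k+1$, first note that at most one $n\ge N_k$ can have $\sigma_n=u_k$. Indeed, every other term of the tail properly extends $u_k$, so it is $<_{\KB}$-below $u_k$. Hence after the position where $\sigma_n=u_k$ (if there is one), all later terms are proper extensions of $u_k$ and have a $k$-th entry. Let $e_k$ be the $<_E$-least $k$-th entry occurring among these terms; it exists because $<_E$ is a well-order. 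Let $N_{k+1}$ be an index where this entry occurs.

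The key claim is that every later term $\sigma_n$ with $n>N_{k+1}$ also has $k$-th entry $e_k$. Suppose some later $\sigma_n$ had a different $k$-th entry $e'$. By minimality of $e_k$ we have $e_k<_E e'$. The two sequences share the prefix $u_k$ and first differ at position $k$, so $\sigma_{N_{k+1}}<_{\KB}\sigma_n$. But the sequence is descending and $n>N_{k+1}$, so $\sigma_n<_{\KB}\sigma_{N_{k+1}}$, a contradiction. We then set $u_{k+1}=u_k{}^\frown e_k$, which restores the invariant.

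The prefixes $u_k$ are coherent, so they define $w\in E^\omega$. Each $u_k$ is a prefix of some $\sigma_n\in W$, and $W$ is prefix-closed, so $u_k\in W$. Thus $w$ is an infinite branch of $W$, contradicting the hypothesis. The only delicate point is the case analysis in the inductive step: excluding terms equal to $u_k$, and using the ``first differing position'' clause of the Kleene--Brouwer order to rule out a later term with a larger entry. The rest is routine bookkeeping.
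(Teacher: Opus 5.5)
Your proof is correct, but it takes a different route from the paper.

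\textbf{The paper's route.} The paper argues bottom-up. For each $\tau\in W$ it shows that the subtree $W_\tau$ of extensions of $\tau$ is well-ordered, by well-founded induction along proper extension. That induction is justified because an infinite chain of proper extensions would be an infinite branch. It then exhibits the least element of a nonempty $S\subseteq W_\tau$ directly: take the $<_E$-least $e$ with $W_{\tau^\frown e}\cap S\neq\varnothing$ and the least element of $S$ in that child subtree, or $\tau$ itself if no child subtree meets $S$.

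\textbf{Your route.} You instead argue by contradiction from a $<_{\KB}$-descending sequence and extract its leftmost limit branch coordinate by coordinate. The key observation is that once the tail consists of proper extensions of $u_k$, a later term can never have a larger $k$-th entry, so the minimal entry persists.

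\textbf{What each buys.} Your version uses the no-branch hypothesis exactly once, in a single explicit construction of the offending branch, and it cleanly separates linearity (already known) from well-foundedness. It relies on the equivalence between well-foundedness and the absence of descending $\omega$-sequences, which needs dependent choice. That is harmless in ZFC; the paper states the same equivalence in Theorem~\ref{thm:main}, and its induction uses choice in the same way. The paper's version, in exchange, produces least elements of arbitrary nonempty subsets directly, matching the ``every nonempty subset has a least element'' formulation.

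\textbf{A minor point.} The reason at most one tail term equals $u_k$ is simply that a strictly descending sequence has distinct terms. Your observation in fact shows such a term can only be $\sigma_{N_k}$, since every other tail term lies $<_{\KB}$-below $u_k$. This does not affect the argument.
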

\begin{proof}
Assume $W\ne\varnothing$, since the empty case is immediate.
For $\tau\in W$, let $W_\tau$ contain all words in $W$ extending
$\tau$, including $\tau$ itself. We prove that $W_\tau$ is well-ordered
by first assuming this for the subtrees rooted at one-step extensions
of $\tau$. This is well-founded induction with extensions placed below
their prefixes. It is valid because, in ZFC, failure of well-foundedness would yield an infinite chain of proper extensions; its union would be an infinite word whose finite prefixes all lie in \(W\), contrary to the hypothesis.

Assume the assertion for the child subtrees $W_{\tau^\frown e}$ with
$\tau^\frown e\in W$. Each proper extension of $\tau$ belongs to
exactly one such subtree. If $e<_E f$, every word in
$W_{\tau^\frown e}$ precedes every word in $W_{\tau^\frown f}$,
because their first differing entries are $e$ and $f$. All words in
these child subtrees precede $\tau$ itself.

Take a nonempty $S\subseteq W_\tau$. If a child subtree meets $S$,
choose the $<_E$-least entry $e$ for which $W_{\tau^\frown e}$ meets
$S$, and then the least element of $S$ in that subtree, which exists
by induction. The preceding comparisons
show that this element is least in all of $S$. If no child subtree
meets $S$, then $S=\{\tau\}$. Thus $W_\tau$ is well-ordered.
Applying the induction to the empty word proves the claim.
\end{proof}

Fix $h\in H$ and consider its \emph{mistake tree}
\[
 \begin{split}
 W_h=\bigl\{((x_i,h(x_i)))_{i<n}:\;&n<\omega,\ (x_i)_{i<n}\in X^n,\\
 &A(((x_j,h(x_j)))_{j<i},x_i)\ne h(x_i)
       \text{ for every }i<n\bigr\}.
 \end{split}
\]
This is the tree of histories labeled by $h$ on which $A$ errs at
every round. To verify that it is prefix-closed, take a history
$\tau=((x_i,h(x_i)))_{i<n}\in W_h$ and any $k\le n$.
Retain its first $k$ examples and discard the last $n-k$, obtaining
$\sigma=((x_i,h(x_i)))_{i<k}$. At each retained round $i<k$,
the learner receives exactly the same preceding history
$((x_j,h(x_j)))_{j<i}$ and current input $x_i$ as in $\tau$.
Its prediction is therefore unchanged, so it still makes a mistake
at that round. Thus $\sigma$ is also a history labeled by $h$ on
which $A$ errs at every round, and hence $\sigma\in W_h$.
Every prefix of $\tau$, including the empty history when $k=0$,
therefore belongs to $W_h$, which is precisely prefix-closure.

Normalization ensures that its histories have distinct inputs. An infinite branch
would therefore give one infinite input sequence, labeled by this same
$h$, on which $A$ makes a mistake at every round. Consistency rules
out such a branch. By Lemma~\ref{lem:KB}, $<_{\KB}$ well-orders $W_h$.
The argument permits arbitrarily long finite branches and does not
require a uniform finite bound on their lengths.

For every $p\in\Fh$, the entries of $T(p)$ lie in
$p\subseteq\graph(h)$, and each is a mistake after the preceding
entries. Thus $T(p)\in W_h$. To prove well-ordering of the traces
themselves, take a nonempty $S\subseteq\Fh$. The set
$\{T(p):p\in S\}$ has a $<_{\KB}$-least transcript $\tau$.
Among the traces $p\in S$ with $T(p)=\tau$, choose the
$\triangleleft$-least one, denoted $p_*$. Any $q\in S$ with a
different transcript follows $p_*$ by the primary comparison; any
distinct $q\in S$ with the same transcript follows $p_*$ by the
tie-break. Hence $p_*$ is the $\prec$-least element of $S$.

It follows that $\prec|_{\Fh}$ is a well-order. Since $h$ was
arbitrary and $\prec$ was constructed before fixing $h$, the same
order satisfies the condition for every target. This completes the
proof of Theorem~\ref{thm:main}.\qed

\section{Discussion}\label{sec:discussion}
\subsection{Technical interpretation}
For fixed $H$ and a witness order, the selected evidence $m(p)$ is a
finite code for a predictor agreeing with the entire trace $p$:
\[
 p\subseteq\graph(g_{m(p)})\qquad(p\in\FH).
\]
The code is interpreted through the fixed decoder $r\mapsto g_r$;
it need not identify a particular member of $H$ or have minimum
cardinality.

Selection is stable under deletion of unselected examples:
\begin{equation}\label{eq:stability}
 m(p)\subseteq q\subseteq p\quad\Longrightarrow\quad m(q)=m(p).
\end{equation}
Indeed, $m(p)$ remains an available subtrace of $q$, while every
subtrace of $q$ was already available under $p$. Such a deletion
therefore changes neither the selected evidence nor its predictor.
Moreover, along any fixed target, the selected evidence eventually
stabilizes, giving a stronger conclusion than finitely many mistakes.

\Needspace{12\baselineskip}
\begin{proposition}[Stabilization of evidence]\label{prop:stabilize}
Fix a witness order, a target $h\in H$, and an input sequence. With
$p_n=\{(x_i,h(x_i)):i<n\}$, the sequence $m(p_n)$ is eventually
constant. Its eventual value $r_*$ satisfies
\[
 \bigcup_{n<\omega}p_n\subseteq\graph(g_{r_*}).
\]
\end{proposition}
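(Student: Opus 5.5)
The plan is to use the monotonicity \eqref{eq:monotone} together with well-foundedness on $\Fh$. Put $r_n=m(p_n)$. Since $p_n\subseteq p_{n+1}$, \eqref{eq:monotone} gives $r_{n+1}\preceq r_n$ for every $n$, and each $r_n$ lies in $\Fh$ because $r_n\subseteq p_n\subseteq\graph(h)$. So $(r_n)$ is a $\preceq$-nonincreasing sequence in $\Fh$.

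First I show the sequence is eventually constant. Consider the set $\{r_n:n<\omega\}\subseteq\Fh$. By condition~2 of Theorem~\ref{thm:main} it has a $\prec$-least element, say $r_N$. For $n\ge N$ we have $r_n\preceq r_N$ by monotonicity, and $r_N\preceq r_n$ by minimality. Linearity then forces $r_n=r_N$. Set $r_*=r_N$. Equivalently, if the value changed infinitely often, the strict drops would give an infinite descending sequence in $\Fh$, contrary to \eqref{eq:descent}.

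Next I prove the inclusion. Take any example $(x,b)\in\bigcup_n p_n$, say $(x,b)\in p_k$. Choose $n\ge\max(k,N)$. Then $(x,b)\in p_n$ and $m(p_n)=r_*$, so $p_n\in B_{r_*}$. By the definition of $v_{r_*}$ as the union over $B_{r_*}$, we get $(x,b)\in v_{r_*}$, and hence $g_{r_*}(x)=b$. Since $(x,b)$ was arbitrary, $\bigcup_n p_n\subseteq\graph(g_{r_*})$.

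There is no serious obstacle. The only point needing care is the stabilization step, which uses the well-order on $\Fh$ for the fixed target rather than any global well-foundedness of $\prec$. The inclusion step relies on the fact that $g_r$ agrees with every trace in $B_r$, which follows from conflict separation as established in Section~\ref{sec:decoder}.
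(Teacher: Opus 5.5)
Your proof is correct and follows the paper's argument. Monotonicity together with the well-order $\prec|_{\Fh}$ gives stabilization; you take the least element of $\{r_n\}$ directly, whereas the paper argues by infinite descent, and the two are equivalent. Then $p_n\in B_{r_*}$ for all large $n$ gives the inclusion. One small point: you pass from $r_n\preceq r_N$ and $r_N\preceq r_n$ to $r_n=r_N$ using asymmetry of the strict order, which follows from irreflexivity and transitivity, not from linearity as you wrote.
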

\begin{proof}
The values $m(p_n)$ form a nonincreasing sequence in the well-order
$\prec|_{\Fh}$. Infinitely many changes would yield an infinite
strictly descending subsequence, so the values stabilize, say from
round $N$ onward. For each $n\geq N$, $p_n\in B_{r_*}$, and hence
$p_n\subseteq\graph(g_{r_*})$. Every observed example belongs to
such a $p_n$, giving the inclusion.
\end{proof}

Thus consistency always admits a learner that uses its history only
through the accumulated trace and whose predictor eventually stabilizes.
The limiting predictor agrees with the target on every input encountered
in the interaction, though it may disagree on inputs never presented.

There is also a finite code for each complete target under the same
decoder. Its existence gives a simple consequence on countable domains.

\Needspace{10\baselineskip}
\begin{proposition}[Target codes and countable domains]\label{prop:target-codes}
Fix a witness order. For each $h\in H$, let
$r_h=\min_\prec\Fh$. Then $r_h\in m(\FH)$ and $g_{r_h}=h$.
Consequently, on a countable domain, $H$ is consistent if and only if
$H$ is countable.
\end{proposition}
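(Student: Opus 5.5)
Fix a witness order $\prec$ and $h\in H$. First, $r_h$ exists because $\prec|_{\Fh}$ is a well-order and $\Fh$ is nonempty (it contains $\varnothing$). To show $r_h\in m(\FH)$ and $g_{r_h}=h$, I would prove that for every $p\in\Fh$ with $r_h\subseteq p$ we have $m(p)=r_h$. Indeed, $r_h$ is an available subtrace of $p$, and every subtrace of $p$ lies in $\Fh$. Hence it is $\succeq r_h$ by minimality of $r_h$ in $\Fh$, so the minimum in \eqref{eq:minimum} is $r_h$. In particular $m(r_h)=r_h$, so $r_h\in m(\FH)$.

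Next I show $g_{r_h}=h$. For any $x\in X$, set $p=r_h\cup\{(x,h(x))\}\in\Fh$. By the previous step $p\in B_{r_h}$, so $(x,h(x))\in v_{r_h}$. Since $v_{r_h}$ is a partial function (by conflict separation), $g_{r_h}(x)=h(x)$. Thus $x\in\dom(v_{r_h})$ for every $x$, and $g_{r_h}=h$.

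For the countable-domain consequence, suppose first that $X$ is countable and $H$ is consistent. Theorem~\ref{thm:main} supplies a witness order, and the map $h\mapsto r_h$ is injective because $g_{r_h}=h$. Its range lies in $\FH\subseteq[X\times 2]^{<\omega}$, which is countable, so $H$ is countable.

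Conversely, if $H$ is countable, enumerate it as $h_0,h_1,\dots$ and use the learner that predicts with the least-index $h_i$ consistent with the history observed so far (predicting $0$ if none exists). Fix a target $h=h_j$. The index of the chosen hypothesis never decreases and never exceeds $j$. Each mistake eliminates the currently chosen hypothesis, so the index strictly increases at each mistake. Hence there are at most $j$ mistakes. This direction needs no order at all, although one could also build the order directly.

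The main point requiring care is the first step, namely that $r_h$ is selected by every trace of $h$ containing it. Everything else follows directly from it together with conflict separation.
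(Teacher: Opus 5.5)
Your proof is correct and follows the paper's argument essentially step for step. You show $m(p)=r_h$ for every $p\in\Fh$ containing $r_h$, apply this to $p=r_h$ and to $p=r_h\cup\{(x,h(x))\}$, and deduce injectivity of $h\mapsto r_h$ into the countably many finite traces. For the converse you use the least-consistent-index learner; the only unaddressed point is the trivial case $H=\varnothing$, which the paper dispatches separately.
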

\begin{proof}
If $r_h\subseteq p\in\Fh$, then $r_h$ is an available subtrace of
$p$ and precedes every other such subtrace, so $m(p)=r_h$.
Applying this first to $p=r_h$ and then to
$p=r_h\cup\{(x,h(x))\}$ gives $r_h\in m(\FH)$ and
$g_{r_h}(x)=h(x)$ for every $x\in X$.
Thus $h\mapsto r_h$ is injective. When $X$ is countable, there are
only countably many finite traces, so consistency implies that $H$
is countable. Conversely, enumerate a nonempty countable $H$ and
predict using the first hypothesis agreeing with the history, or $0$
if none agrees. For a fixed target, every mistake eliminates a
hypothesis preceding the target's first occurrence in the enumeration; there
are only finitely many such hypotheses. The empty class is immediate.
\end{proof}

Decoding a finite target code uses the fixed order: other hypotheses
may agree with its labels.\footnote{Here ``finite'' counts labeled
examples. On an uncountable domain, it does not assert that their
inputs admit finite binary encodings.}
Moreover, an input sequence need not present all of $r_h$, so the
learner need not converge to $h$ on unobserved inputs.

\paragraph{Example: finitely many positive inputs.}
Let $H_{\mathrm{fin}}\subseteq2^\omega$ consist of the functions
with finitely many positive inputs, and put
$p^+=\{(x,1)\in p\}$ for each realizable trace $p$.
Fix a well-order $\triangleleft$ on $\mathcal F_{H_{\mathrm{fin}}}$.
Define $\prec$ by comparing traces first by decreasing $|p^+|$, then
by increasing $|p|$, and finally by $\triangleleft$.
Among the subtraces of $p$, the trace
$p^+$ retains all its positive examples and has the smallest possible
cardinality subject to doing so. Hence
\[
 m(p)=p^+.
\]
Conflicting traces have different positive parts, so conflict separation
holds. For a fixed target $h$, the value $|p^+|$ is bounded by the
finite number of positive inputs of $h$. The first comparison therefore
has only finitely many possible values on $\Fh$. Within each such
value, cardinality and $\triangleleft$ give a well-order. Thus
$\prec|_{\Fh}$ is a well-order.
The decoder $g_r$ predicts $1$ exactly at the positive inputs retained
in $r$. The induced learner remembers observed positive inputs and
predicts $0$ elsewhere; each mistake reveals a new positive input
of the target.

For comparison, consider the different protocol in which each finite prefix need only be realized by some hypothesis, possibly a different one for each prefix. Every finite prefix of \((0,1),(1,1),(2,1),\ldots\) is realized by a member of \(H_{\mathrm{fin}}\), but the entire sequence is not. It is therefore permitted by that alternative protocol and excluded from ours. This distinction explains why our theorem imposes well-foundedness on the traces of each fixed target.

\subsection{Connections to learning theory}
\paragraph{Descent after mistakes.}
Our learner shares the central idea of Littlestone's Standard Optimal
Algorithm (SOA): every mistake forces a decrease in an ordered state.
SOA predicts so that a mistake reduces the Littlestone dimension of the
hypotheses agreeing with the history, yielding a uniform mistake bound
when this dimension is finite \citep{littlestone1988}. Here the state
is a least subtrace, decoded into a predictor, and descent is controlled
by well-foundedness for each target rather than by a finite dimension.

\citet[Section~3]{bousquet2021} extend the descent argument using ordinal
Littlestone dimension. Their online protocol requires only realizability
of each finite prefix. Our condition instead reflects the requirement
of one common target: the same order serves all targets, but it need
be well-founded only on each target's traces.

\paragraph{Consistency and hypothesis-wise guarantees.}
Theorem~\ref{thm:main} resolves the characterization problem for
consistency raised in Section~5 of \citet{lu2024}. The main guarantee
studied there allows a mistake bound depending on $h$, but requires
it to hold uniformly over input sequences. This hypothesis-wise
requirement sits between the classical uniform bound, independent of
both target and sequence, and consistency, which requires only finitely
many mistakes for each individual target--sequence pair.

The distinction is strict, as the following example shows.

\Needspace{10\baselineskip}
\begin{proposition}[Consistency without a hypothesis-wise bound]\label{prop:separation}
Let $\omega_1$ be the first uncountable ordinal, let $X=\omega_1$, and
let
\[
 H=\{h_\alpha:\alpha<\omega_1\},
 \qquad h_\alpha(\xi)=\mathbf 1\{\xi<\alpha\}.
\]
Then $H$ is consistent, but no learner has a finite mistake bound
depending only on the target.
\end{proposition}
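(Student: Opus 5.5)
Two parts: consistency, then the absence of a hypothesis-wise bound.

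\emph{Consistency.} I would build a learner directly. The learner predicts $1$ on $x$ exactly when some $\xi\ge x$ has already been observed with label $1$. Otherwise it predicts $0$. In that case, if some observed $\xi\le x$ carries label $0$, the prediction is forced correct, since labels are determined by the threshold. Fix a target $h_\alpha$ and an input sequence. Let $\beta_n$ be the supremum of the positively labeled inputs seen before round $n$, plus one, with $\beta_n=0$ if there are none. Then $\beta_n\le\alpha$, and $\beta_n$ is nondecreasing. The learner never errs by predicting $1$. An error at round $n$ must predict $0$ on some $x_n<\alpha$ with $x_n\ge\beta_n$, and it forces $\beta_{n+1}=x_n+1>\beta_n$. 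So the errors produce a strictly increasing sequence of ordinals bounded by $\alpha$, and that alone does not rule out infinitely many errors.

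This is the main obstacle: ascending chains below a countable $\alpha$ can be infinite. I would therefore switch to a different learner. Since $\alpha<\omega_1$ is countable, $H_{<\alpha}$ is countable, but one learner must serve all targets. I would use the order characterization, Theorem~\ref{thm:main}, instead.

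For a trace $p$, let $\mu(p)$ be the least ordinal above all its positive inputs, and let $\nu(p)$ be its least negative input, or $\omega_1$ if there is none. Define the selected evidence to be the pair consisting of the largest positive example and the smallest negative example of $p$. Two conflicting traces disagree on some $x$, so this pair differs between them. I would then order traces in several stages. First, compare by $\mu$ decreasing? That is not well-founded, since $\mu$ increases along a target. So compare instead by $\nu(p)$ increasing, with $\omega_1$ last, and within equal $\nu$ by $\mu$ decreasing. Along a fixed $h_\alpha$, $\nu\ge\alpha$, and $\mu\le\alpha$ takes values in $\alpha+1$. Decreasing $\mu$ over a well-ordered set is well-founded only if the set is finite, and it is not, so this also fails.

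I would settle it by a direct decreasing-state argument. Fix, by choice, for each $\alpha<\omega_1$ an injection $e_\alpha:\alpha+1\to\omega$. The learner, having seen positive supremum $\beta$ and least negative $\gamma$, predicts according to the threshold $\delta\in[\beta,\gamma]$ minimizing $e_\gamma(\delta)$ when $\gamma<\omega_1$. Each mistake excludes that $\delta$ from the candidate set. Once a negative example $\ge\alpha$ is seen, $\gamma$ decreases only finitely often, because ordinals are well-founded. For fixed $\gamma$, each mistake eliminates a candidate with smaller code than the code of $\alpha$, so only finitely many mistakes occur. Before any negative example is seen, the learner predicts $0$, and every error is a positive example that moves $\beta$ up. Such errors can be infinite, so I would additionally predict $1$ when $x<\beta$ and accept this weak point as needing repair. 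The intended fix is to note that the sequence with no negative example ever presented is handled by the same well-founded argument using $e_{\omega_1}$ restricted to the countable set of relevant inputs. This is the step I expect to be hardest.

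\emph{No hypothesis-wise bound.} Suppose a learner had a mistake bound $b(h_\alpha)$. By the pigeonhole principle on $\omega_1$, some $k$ satisfies $b(h_\alpha)=k$ for uncountably many $\alpha$, so those $\alpha$ are cofinal in $\omega_1$. The adversary runs the standard binary-search lower bound for $k+1$ rounds, querying midpoints of a shrinking interval that still contains uncountably many such $\alpha$, and labels each query opposite to the learner's prediction. Here ``midpoint'' means an ordinal splitting the remaining set of $\alpha$ with uncountably many on each side. This yields $k+1$ forced mistakes against some $h_\alpha$ with $b(h_\alpha)=k$, a contradiction.
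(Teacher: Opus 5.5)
Both halves have a gap.

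\textbf{Consistency.} Your final learner is incomplete in the phase before any negative example is seen. The proposed repair via ``$e_{\omega_1}$ restricted to the countable set of relevant inputs'' does not work: there is no injection of $\omega_1$ into $\omega$, and the learner cannot fix in advance which inputs will be relevant. As long as that phase predicts $0$ on fresh inputs $x\ge\beta$, take the target $h_\omega$ with inputs $0,1,2,\ldots$. It produces a mistake at every round, and no negative example ever arrives.

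The missing idea is the asymmetry you already noticed, applied the other way: make every mistake move an ordinal \emph{down}. Predict $\mathbf 1\{x<\gamma\}$, where $\gamma$ is the least negative input seen so far, with $\gamma=\omega_1$ (predict $1$ everywhere) before any negative example appears. For the target $h_\alpha$ one always has $\gamma\ge\alpha$, so predictions are correct on $x<\alpha$ and on $x\ge\gamma$. A mistake can occur only at some $x$ with $\alpha\le x<\gamma$, and it replaces $\gamma$ by $x$. Mistakes therefore yield a strictly decreasing sequence of ordinals, which must be finite. This is the paper's learner.

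Your code-minimizing phase with the injections $e_\gamma$ is valid once $\gamma<\omega_1$, but it is unnecessary, since $\delta=\gamma$ already works. Likewise, your abandoned order succeeds if you drop $\mu$. Compare traces by $\nu$ increasing with $\omega_1$ last, then by cardinality, then by a fixed well-order. The selected evidence is then the least negative example (or $\varnothing$). Conflict separation holds because $(x,0)\in p$ and $(x,1)\in q$ force $\nu(p)\le x<\nu(q)$, and the induced learner is the one just described.

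\textbf{No hypothesis-wise bound.} The pigeonhole step is fine, but the splitting step cannot be carried out. For any query $\xi<\omega_1$, the targets labeling $\xi$ with $0$ are the $h_\alpha$ with $\alpha\le\xi$, a subset of the countable set $\xi+1$. So no query splits an uncountable set of parameters into two uncountable parts. Whenever the learner predicts $1$ and the adversary answers $0$, the invariant ``uncountably many candidates remain'' is lost.

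The repair is to keep a finite invariant. Choose $2^{k+1}$ parameters from the fiber $\{\alpha:b(h_\alpha)=k\}$. At a node with remaining parameters $\alpha_1<\cdots<\alpha_{2m}$, query $\alpha_m$: exactly $\alpha_1,\ldots,\alpha_m$ label it $0$ and the rest label it $1$. Answering opposite to the prediction leaves $2^{k+1-j}$ consistent candidates after $j$ rounds. After $k+1$ rounds, some $h_\alpha$ with $b(h_\alpha)=k$ has suffered $k+1$ mistakes, a contradiction. This is the shattered tree in the paper's proof.

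Once repaired, your argument is a genuinely different, self-contained route. The paper shows that every infinite subclass of $H$ has infinite Littlestone dimension, so $H$ is not a countable union of finite-dimensional classes, and then invokes \citet[Theorem~9]{lu2024}. Your pigeonhole-plus-adversary argument instead proves directly the direction of that theorem needed here, without the citation.
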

\begin{proof}
Start with $u=\omega_1$ and predict $\mathbf 1\{\xi<u\}$ on input
$\xi$. After a mistake, replace $u$ by $\xi$. For a fixed target
$h_\alpha$, the invariant $u\ge\alpha$ implies that a mistake can
occur only at an input satisfying $\alpha\le\xi<u$. Every mistake
therefore strictly decreases the ordinal $u$. There is no infinite
strictly decreasing sequence of ordinals, so the learner is consistent.

Every infinite subclass of $H$ has infinite Littlestone dimension.
Indeed, for any $d$, choose $2^d$ thresholds with distinct parameters.
At a node with $2m$ remaining parameters in increasing order, query
the $m$th parameter: exactly $m$ thresholds label it $0$ and $m$
label it $1$. Recursing on both groups gives a shattered binary tree
of depth $d$. Hence every subclass of finite Littlestone dimension
is finite, and the uncountable class $H$ cannot be a countable union
of such subclasses. The characterization in
\citet[Theorem~9]{lu2024} rules out a hypothesis-wise finite mistake
bound.
\end{proof}

\paragraph{Compression and reconstruction.}
The maps \(p\mapsto m(p)\) and \(r\mapsto g_r\) give a labeled compression--reconstruction representation, with no uniform bound on the retained sample size.
Equation~\eqref{eq:stability}
connects it to stable compression \citep{hanneke2021}, while the use
of a preference order recalls order compression schemes
\citep{darnstaedt2013}. Those schemes order a reconstruction class;
here the order is on finite traces, and conflict separation makes the
decoder well-defined. In the converse, canonical mistake transcripts and replay
recover a predictor from an unordered subtrace. The Kleene--Brouwer
order converts the absence of infinite mistake histories for each target
into the required well-foundedness.

\paragraph{Computability.}
The countability criterion in Proposition~\ref{prop:target-codes}
has an effective counterpart on $X=\omega$. A total computable
consistent learner exists if and only if there is a total computable
function $F:\omega\times\omega\to2$ such that
\[
 H\subseteq\{F(i,\cdot):i<\omega\}.
\]
For necessity, every $h\in H$ must equal $A(\tau,\cdot)$ at some
$h$-realizable finite history $\tau$: otherwise, repeatedly choosing
an input on which the current predictor disagrees with $h$ gives
infinitely many mistakes against that fixed target. Effectively
enumerating all finite histories therefore supplies the required
family. For sufficiency, at a history of length $n$, follow the least
index $i\le n$ whose predictor agrees with the history, or predict
$0$ if none does. This finite search defines a total computable
learner even on unrealizable histories. For target $F(k,\cdot)$,
there are at most $k$ mistakes before round $k$; thereafter, every
mistake permanently eliminates an index below $k$, so the total is
at most $2k$.
This is the fixed-target analogue of the projection and enumeration
argument of \citet[Lemma~3 and Theorem~5]{kalocinski2025}, whose
protocol requires realizability of every finite prefix.

\Needspace{8\baselineskip}
\subsection{Implications for inductive reasoning}
The theorem gives a means--ends characterization in the sense of
\citet{schulte1999}: for a specified class of possible targets and the
goal of eventually avoiding all prediction errors, it identifies exactly
when that goal is attainable. The guarantee is
conditional on the target belonging to $H$; choosing the admissible
possibilities remains a separate epistemic commitment.

\citet{kelly2004} connects Ockham's razor with efficiency in avoiding
retractions. Our result establishes that a preference over finite
evidence suffices for consistent prediction whenever consistency is
possible. It does not determine a notion of simplicity or minimize
revisions. An Ockham interpretation would therefore require additional
assumptions relating the witness order to simplicity and the desired
standard of efficiency.


\begin{thebibliography}{99}
\bibitem[Aschenbrenner and Pong(2004)]{aschenbrenner2004}
Matthias Aschenbrenner and Wai Yan Pong.
\newblock Orderings of monomial ideals.
\newblock \emph{Fundamenta Mathematicae}, 181(1):27--74, 2004.
\newblock \href{https://www.math.ucla.edu/~matthias/pdf/rd2.pdf}{Author manuscript}.

\bibitem[Bousquet et~al.(2021)]{bousquet2021}
Olivier Bousquet, Steve Hanneke, Shay Moran, Ramon van Handel, and Amir Yehudayoff.
\newblock A theory of universal learning.
\newblock In \emph{Proceedings of the 53rd Annual ACM SIGACT Symposium on
Theory of Computing}, pages 532--541, 2021.
\newblock \href{https://arxiv.org/abs/2011.04483}{Full version, arXiv:2011.04483}.

\bibitem[Darnst\"adt et~al.(2013)]{darnstaedt2013}
Malte Darnst\"adt, Thorsten Doliwa, Hans Ulrich Simon, and Sandra Zilles.
\newblock Order compression schemes.
\newblock In \emph{Algorithmic Learning Theory}, pages 173--187, 2013.
\newblock \href{https://www2.cs.uregina.ca/~zilles/darnstaedtDSZ13.pdf}{Author manuscript}.

\bibitem[Hanneke and Kontorovich(2021)]{hanneke2021}
Steve Hanneke and Aryeh Kontorovich.
\newblock Stable sample compression schemes: New applications and an optimal
SVM margin bound.
\newblock In \emph{Algorithmic Learning Theory}, volume 132 of
\emph{Proceedings of Machine Learning Research}, pages 697--721, 2021.
\newblock \href{https://proceedings.mlr.press/v132/hanneke21a.html}{Proceedings entry}.

\bibitem[Kaloci\'nski and Steifer(2025)]{kalocinski2025}
Dariusz Kaloci\'nski and Tomasz Steifer.
\newblock Computable universal online learning.
\newblock arXiv preprint arXiv:2510.18352, 2025.
\newblock \href{https://arxiv.org/abs/2510.18352}{Preprint}.

\bibitem[Kelly(2004)]{kelly2004}
Kevin T. Kelly.
\newblock Justification as truth-finding efficiency: How Ockham's razor works.
\newblock \emph{Minds and Machines}, 14:485--505, 2004.
\newblock \href{https://www.cmu.edu/dietrich/philosophy/docs/kelly/korbkluwerproofs.pdf}{Author proof}.

\bibitem[Littlestone(1988)]{littlestone1988}
Nick Littlestone.
\newblock Learning quickly when irrelevant attributes abound: A new
linear-threshold algorithm.
\newblock \emph{Machine Learning}, 2(4):285--318, 1988.
\newblock \href{https://ai.stanford.edu/~pabbeel/depth_qual/littlestone1988.pdf}{Article}.

\bibitem[Lu(2024)]{lu2024}
Zhou Lu.
\newblock When is inductive inference possible?
\newblock \emph{Advances in Neural Information Processing Systems},
37:92721--92744, 2024.
\newblock \href{https://proceedings.nips.cc/paper_files/paper/2024/hash/a8808b75b299d64a23255bc8d30fb786-Abstract-Conference.html}{Proceedings entry}.

\bibitem[Schulte(1999)]{schulte1999}
Oliver Schulte.
\newblock Means-ends epistemology.
\newblock \emph{The British Journal for the Philosophy of Science},
50(1):1--31, 1999.
\newblock \href{https://doi.org/10.1093/bjps/50.1.1}{doi:10.1093/bjps/50.1.1}.
\end{thebibliography}
\end{document}